\documentclass{article}

\PassOptionsToPackage{numbers,compress}{natbib}

\usepackage[main, preprint]{neurips_2026}

\usepackage[utf8]{inputenc} 
\usepackage[T1]{fontenc}    
\usepackage{hyperref}       
\usepackage{url}            
\usepackage{booktabs}       
\usepackage{amsfonts}       
\usepackage{nicefrac}       
\usepackage{microtype}      
\usepackage{xcolor}         
\usepackage{amsmath}
\usepackage{amssymb}
\usepackage{graphicx}
\usepackage{amsthm}
\newtheorem{theorem}{Theorem}[section]
\newtheorem{proposition}[theorem]{Proposition}

\newtheorem{definition}[theorem]{Definition}
\newtheorem{assumption}[theorem]{Assumption}
\newtheorem{remark}[theorem]{Remark}
\usepackage{amsmath}
\usepackage{cancel}
\usepackage{multirow}
\title{Why Latent Actions Fail, and How to Prevent It}

\author{%
  Jung Min Lee\textsuperscript{1} \quad
  Taehyun Cho\textsuperscript{1} \quad
  Li Zhao\textsuperscript{2} \quad
  Jungwoo Lee\textsuperscript{1} \\
  \textsuperscript{1}Seoul National University \quad
  \textsuperscript{2}Microsoft Research \\
  \texttt{\{jmleeluck,talium,junglee\}@snu.ac.kr} \quad
  \texttt{lizo@microsoft.com}
}

\begin{document}

\maketitle
\begin{abstract}
Latent action models (LAMs) aim to learn action-like representations from unlabeled videos by compressing frame-to-frame changes. 
The frames of in-the-wild videos, however, contain not only the agent's own state but \textit{exogenous state} such as background clutter.
Since the exogenous state introduces changes unrelated to actions, it hinders reliable latent action learning.
This paper investigates this problem analytically by extending a linear LAM framework to explicitly model exogenous state.
Our analysis reveals two insights: (1) minimizing the standard reconstruction objective produces latent actions that encode exogenous information from future observation; and (2) learning in a representation space that focuses on endogenous components is a key to mitigating the interference of noise.
We further show that previously proposed auxiliary objectives, such as action-supervision, provably encourage latent actions to be consistent across exogenous states.
These findings are validated through experiments on both linear and nonlinear LAMs, providing a unified theoretical analysis of how exogenous state hinders latent action learning and why common remedies work.
\end{abstract}

\section{Introduction}\label{sec:intro}
Latent action models (LAMs) have emerged as a promising approach for learning action-relevant features from videos without action labels. 
This is particularly advantageous to real-world robotics, where collecting action-labeled datasets at scale is challenging~\cite{lfvsurvey}. 
Obtaining action annotations typically requires extensive human-expert teleoperation, careful curation, and repeated calibration, which creates a major bottleneck for dataset scaling.
LAM mitigates this limitation by learning from unlabeled videos and producing pseudo-action labels that supervise downstream models, such as vision-language-action models (VLA)~\cite{lapa, univla, villa, mvp_lam} and world models~\cite{genie, adaworld}. 

For pseudo-action supervision to be effective, latent actions must align with ground-truth actions even in the absence of action labels. 
This is nontrivial in in-the-wild videos, where observations often contain manipulation-irrelevant variation such as background motion and camera-induced changes. 
Prior works theoretically show that latent actions can spuriously encode such \textit{exogenous noise}~\cite{linear_lam, misra2024towards}.
For instance, latent actions trained on egocentric data tend to encode camera motion rather than physical change, misleading downstream models.
Therefore, to improve action alignment, reducing exogenous noise is necessary.
Existing approaches reduce it by introducing additional training objectives, such as a small amount of action (or optical-flow) supervision, language descriptions, or multi-view datasets~\cite{univla, mvp_lam, laom, laof}.
These methods augment the standard LAM objective with tractable auxiliary objectives to mitigate exogenous noise.
However, why exogenous noise degrades latent actions and how auxiliary objectives mitigate it have not been analyzed in a unified manner.

In this work, we provide a unified theoretical analysis of how exogenous variation degrades latent action learning and why auxiliary training objectives mitigate it, building on the linear LAM framework. 
We first extend linear LAM to Exogenous Block Markov Decision Process (Ex-BMDP) to explicitly model a source of exogenous noise.
Under this formulation, we show that training a LAM with a reconstruction-based objective causes exogenous information from the future frame to leak into the latent action, reducing the capacity available for the action.
We further analyze how the strength of exogenous noise behaves and explain why previously proposed auxiliary objectives improve action alignment.
Our contributions are as follows.
\begin{enumerate}
    \item We extend the linear LAM setting to Ex-BMDP, providing a framework that explicitly models the source of exogenous noise. 
    Within this framework, we show that latent actions learned by reconstruction leak exogenous information from future frames, and that the energy of the resulting noise is governed by the sensitivity of the observation to exogenous changes.
    \item We reformulate previously proposed auxiliary objectives under the Ex-BMDP. 
    We then theoretically show that they mitigate exogenous noise by encouraging consistency across exogenous states.
    \item We validate our analysis in controlled simulations with linear LAMs, and further show that the findings hold under general nonlinear LAM.
\end{enumerate}

\section{Related Work}
\paragraph{Latent Action Learning from Videos.}
Recent works have made substantial progress in learning latent actions from unlabeled videos. 
One line of research learns discrete latent actions using vector-quantized variational autoencoders (VQ-VAE)~\cite{vqvae}. 
These discretized latent actions have been used to supervise VLA models~\cite{lapa, univla, mvp_lam} and world models~\cite{genie} even when the ground-truth actions are not provided.
In particular, latent actions have shown strong promise in robot manipulation, including cross-embodiment transfer~\cite{lapa} and the ability to scale VLA training with human videos in addition to robot datasets~\cite{univla, igor}. 
While these approaches offer effective ways to incorporate latent actions into downstream embodied AI systems, they offer a limited understanding of when and why latent actions align with ground-truth actions.

\paragraph{Exogenous Noise in Latent Action Learning}
Exogenous noise causes frame transitions, but is independent of the agent's state and action.
For instance, the motion of other agents or background movements are representative examples of it.
Prior works show that such noise hinders reliable latent action learning~\cite{linear_lam, misra2024towards, laom}.
A theoretical analysis further shows that LAMs can fail when noise variation dominates the action-induced change~\cite{linear_lam}.
However, their formulation models exogenous noise as an additive term in the observation transition, without explicitly characterizing its source in the environment.
As a result, an in-depth analysis of exogenous noise is difficult under this formulation.

To mitigate exogenous interference in practice, LAOM~\cite{laom} and LAOF~\cite{laof} supervise latent action learning with a small amount of action labels and optical flow, respectively.
UniVLA~\cite{univla} uses language descriptions to encode task-centric latent actions, while MVP-LAM~\cite{mvp_lam} leverages multi-view datasets to encode consistent transitions across viewpoints, achieving higher mutual information with actions.
Despite promising empirical results, the theoretical basis for why such auxiliary objectives alleviate exogenous interference remains an open question.


\begin{figure}[!t]
    \centering
    \includegraphics[width=\linewidth]{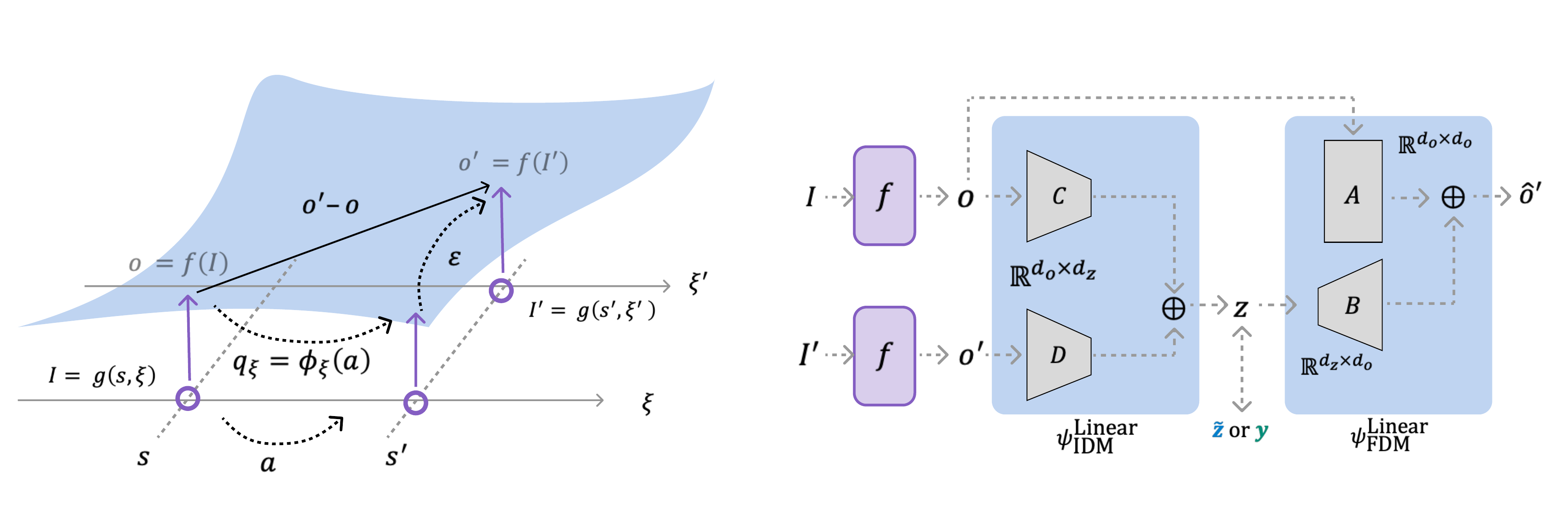}
    \caption{\textbf{Overview of Linear LAM.} \textbf{(Left)}: Observation transitions contain a state-driven component $  q_\xi = \phi_\xi(a)$ and a $\xi$-driven exogenous component $\varepsilon$. 
    \textbf{(Right)}: Architecture of linear LAM with a pretrained vision encoder $f$. Using a vision encoder $f$ that is robust to $\xi$-variation, latent actions \textcolor[HTML]{0081CF}{$\tilde{z}$} from other exogenous state $\tilde{\xi}$, and an exogenous-robust target \textcolor[HTML]{008F7A}{$y$} improve latent action learning.}
    \label{fig:overview}
\end{figure}

\section{Setup}
In this section, we first discuss the Exogenous Block Markov decision process (Ex-BMDP) proposed by \citet{exogeneous_idm}.
Ex-BMDP allows us to describe the general latent action learning in various video datasets.
Then, we explain the standard latent action learning scenario in Ex-BMDP and extend the linear LAM setting to Ex-BMDP.

\subsection{Exogenous Block Markov Decision Process}
For a given set $\mathcal{U}$, we use $\Delta(\mathcal{U})$ to denote the set of all possible probability distributions over $\mathcal{U}$. 
We first consider the Block Markov Decision Process (BMDP) system.
BMDP is specified by 
\begin{equation}
    \mathcal{M} = (\mathcal{O}, \mathcal{X}, \mathcal{A}, T, q, R, H, \mu)
\end{equation}
where $\mathcal{O}$ is a set of observations, $\mathcal{X}$ is a set of latent states, $\mathcal{A}$ is a finite set of actions. 
We first start from initial distribution $\mu \in \Delta(\mathcal{X})$, and the agent interacts with the environment by repeatedly generating $H$-step trajectories $(x_1, o_1, a_1, r_1, \dots, x_H, o_H, a_H, r_H)$ where the agent first receives observation $o_t \sim q(\cdot\mid x_t)$ from emission function $q:\mathcal{X} \rightarrow \Delta(\mathcal{O})$ and chooses actions $a_t \sim \pi(\cdot|o_t)$ using policy $\pi:\mathcal{O} \rightarrow \Delta(\mathcal{A})$. 
Then, the latent state evolves $x_{t+1} \sim T(\cdot\mid x_t, a_t)$ and the agent receives a reward $r_t = R(x_t, a_t)$.

Ex-BMDP is an extension of BMDP by assuming that $\mathcal{X} = \mathcal{S} \times \Xi$ where $\mathcal{S}$ is the set of (minimal) \textit{endogenous states}, which is the (minimal) information that is necessary to control the agents and fully discards unnecessary information \cite{laom, minimal_endogenous}, and $\Xi$ is a set of \textit{exogenous states}. 
For $x \in \mathcal{X}$, Ex-BMDP further assumes that the initial distribution and transition functions are decoupled, that is
\begin{equation}
    \mu(x) = \mu_s(s)\mu_\xi(\xi),\qquad T(x'|x, a) = T_s(s'|s,a)T_\xi(\xi'|\xi)
\end{equation}
where $\mu_s\in \Delta(\mathcal{S})$ and $T_s: \mathcal{S} \times \mathcal{A} \rightarrow \Delta(\mathcal{S})$ are initial distribution and transition function over state space, while $\mu_{\xi} \in \Delta(\Xi)$ and $T_\xi:\Xi \rightarrow \Delta(\Xi)$ are that over exogenous state space.

\subsection{Standard LAM training}
\paragraph{Ex-BMDP in latent action learning.}
The standard latent action learning scenario can be expressed under Ex-BMDP. 
Assume that the image is generated by a deterministic rendering function $I = g(s, \xi)$ where $g: \mathcal{S}\times \Xi \rightarrow \mathbb{R}^{C\times H \times W}$.
To model the common LAM setting with a vision encoder, we further introduce an encoder $f(\cdot): \mathbb{R}^{C \times H \times W} \rightarrow \mathcal{O}$,
and define the observation as 
\begin{equation}
    o = f(I) = f(g(s, \xi)) \equiv h(s, \xi).
\end{equation}
Here, $h:\mathcal{S}\times \Xi \rightarrow \mathcal{O}$ denotes the deterministic emission map from $(s, \xi)$ to $\mathcal{O}$. 
This formulation also includes the pixel-space setting as a special case, where $f$ is the identity map and $\mathcal{O} = \mathbb{R}^{C \times H \times W}$.
We additionally assume the data-collecting policy is an \emph{endogenous policy}, i.e., $\pi(a|o) = \pi(a|s)$~\cite{misra2024towards, minimal_endogenous}.
This ensures that the action $a$ does not depend on exogenous variation given the endogenous state.

\paragraph{Latent action learning.}
Given a dataset $\mathcal{D}$ of consecutive observations, a latent action model learns a latent action from a pair of observations and predicts the next observation from the current observation and the inferred latent action. Concretely, the inverse dynamics model $\psi_{\mathrm{IDM}}$ infers a latent action $z$ from $(o, o')$, and the forward dynamics model $\psi_{\mathrm{FDM}}$ reconstructs $o'$ from $(o, z)$
\begin{equation}
    z = \psi_\mathrm{IDM}(o, o'), \quad
    \hat{o}' = \psi_\mathrm{FDM}(o, z),
\end{equation}
with the standard LAM objective
\begin{equation}
    \mathcal{L}_\mathrm{LAM}
    =
    \min_{\psi_\mathrm{IDM}, \psi_\mathrm{FDM}}
    \mathbb{E}_{\mathcal{D}}
    \left[
        \lVert o' - \hat{o}' \rVert_2^2
    \right].
    \label{eqn:lam_obj}
\end{equation}
Both models are trained in an end-to-end manner with Equation \eqref{eqn:lam_obj}.

\paragraph{Goal.}
Our goal is to learn latent actions from reward- and action-free data that are highly informative about ground-truth actions.
Following \citet{linear_lam} and \citet{mvp_lam}, we formalize this desideratum as maximizing the mutual information between the latent action $z$ and the action $a$:
\begin{equation}
\max_{z\in \mathbb{R}^{d_z}}\;\mathcal{I}(z;\,a).
\label{eqn:mi}
\end{equation}

\subsection{Linear LAM setting}
The linear LAM framework provides theoretical simplicity to analyze latent actions~\cite{linear_lam}.
Consider observations $o \in \mathbb{R}^{d_o}$ and the action $a \in \mathbb{R}^{d_a}$ which has lower dimensionality ($d_a \ll d_o$).
For theoretical simplicity, linear LAM assumes that the forward dynamics are additive in observation space, consisting of the observation $o$, a controllable change $q_\xi = \phi_\xi(a)$ where $\phi_\xi: \mathbb{R}^{d_a}\rightarrow \mathbb{R}^{d_o}$ is the action-effect function, and exogenous noise $\varepsilon$.
\begin{equation}
    o' = o + q_\xi +\varepsilon
    \label{eqn:linear_lam_dyn}
\end{equation}
Also, linear LAM assumes that $\psi_\mathrm{IDM}$ and $\psi_\mathrm{FDM}$ are implemented as linear maps
\begin{align}
    z &= \psi_\mathrm{IDM}^\mathrm{Linear}(o, o') = Co + Do', \\
    \hat{o}' &= \psi_\mathrm{FDM}^\mathrm{Linear}(o, z) = Ao + Bz,
\end{align}
where $z \in \mathbb{R}^{d_z}$ with $d_z \ll d_o$.
All parameters $A \in \mathbb{R}^{d_o\times d_o}$, $B \in \mathbb{R}^{d_o \times d_z}$, and $C, D \in \mathbb{R}^{d_z \times d_o}$ are learnable.
We let $\theta = (A,B,C,D)$ for brevity. 
Figure~\ref{fig:overview} overviews this setup.

\section{Theoretical Analysis and Experiments}\label{sec:theory}
We now study how the exogenous state affects latent action learning.
We focus on the regime where the energy of $\varepsilon$ is comparable to or exceeds that of $q_\xi$, the typical setting for latent action learning from in-the-wild videos.
We first show that the reconstruction-based objective causes exogenous information from the future frame to leak into latent actions (Section \ref{sec:future_leak}). 
We then identify the visual representation as the dominant factor controlling the energy of exogenous noise (Section \ref{sec:exo_energy}). 
Finally, we analyze two families of auxiliary objectives proposed in prior work and show that both enforce consistency of latent actions across exogenous states (Section \ref{sec:cross_exo} and \ref{sec:exo_robust}).

\paragraph{Linear LAM experiment.}
Before our theoretical analysis, we first describe the linear LAM experimental setup used to empirically validate the results throughout this section.
We first construct synthetic trajectories.
The state evolves as $s_{t+1}=s_t+a_t$, where $a_t\sim\mathcal{N}(0,I)$.
For the exogenous state, we consider $n_\xi$ number of exogenous states and they switch with probability $p_{\mathrm{switch}} \in [0, 1]$ by uniformly choosing from the other $n_\xi-1$ exogenous states.
Observations are generated by a $\xi$-dependent linear map $o=h(s,\xi)=H_{\xi}s$ where $H_\xi=H_0+\alpha R_\xi$ with random matrices $R_\xi$ and positive constant $\alpha \in \mathbb{R}_{\ge0}$.
To make $R_\xi$, we first initialize $n_{\xi}$ different random matrices and choose one of them for each $\xi$.
Note that $H_0$ is fixed across the exogenous state $\xi$.

To evaluate the action-alignment of latent actions, we train a linear probe \cite{linear_probe} with a simple mean squared error (MSE) objective and measure the normalized mean squared error (NMSE) of the linear probe.
This measures \textit{how much action-relevant information is included in latent actions} and it is widely used to evaluate the quality of latent action \cite{mvp_lam, laom}.
This evaluation metric serves as a proxy for Equation~\eqref{eqn:mi}, since for jointly Gaussian $z$ and $a$, $-\log \mathbb{E}[\|\hat{a}(z) - a\|^2]$ lower-bounds $\mathcal{I}(z;a)$ up to a constant, where $\hat{a}(z)$ is the least-squares estimate of $a$ given $z$.
Here, we freeze the LAM and evaluate the quality of latent actions using a linear map $\hat{a}(z) = Mz + b$ where $M \in \mathbb{R}^{d_a \times d_z}, b \in \mathbb{R}^{d_a}$ are trainable.
For each configuration, we report the mean and standard error across 4 seeds.
We provide additional details of linear LAM experiments in Appendix \ref{sec:sim_detail}.

\subsection{Additive dynamics in Ex-BMDP}
For theoretical tractability, we introduce the following assumptions on the Ex-BMDP, under which additive dynamics in Equation~\ref{eqn:linear_lam_dyn} are naturally derived.
\begin{assumption}\label{assmp:exbmdp_to_linear_lam}
Suppose we have the dataset $\mathcal{D}$ generated by an Ex-BMDP with deterministic emission $h: \mathcal{S}\times \Xi \rightarrow \mathcal{O}$ and deterministic endogenous transition $s' = T_s(s, a)$.
We assume that for all $s \in \mathcal{S}$, $\xi \in \Xi$, and $a \in \mathcal{A}$, there exists a function $\phi_\xi: \mathcal{A} \to \mathcal{O}$ such that
\begin{equation}\label{eqn:q_c}
\phi_\xi(a) = h(s', \xi) - h(s, \xi), \quad \text{where } s' = T_s(s, a).
\end{equation}
The exogenous noise $\varepsilon$ arises from variation of the exogenous state at a fixed endogenous state:
\begin{equation}\label{eqn:noise}
\varepsilon = h(s', \xi') - h(s', \xi).
\end{equation}
\end{assumption}

\paragraph{Justification of Assumption~\ref{assmp:exbmdp_to_linear_lam}.}
The key idea is to decompose the observation change into two components according to its source:
changes caused by the agent's action are captured through the endogenous transition under a fixed exogenous state (Equation~\eqref{eqn:q_c}), while changes unrelated to the action are attributed to exogenous state variation under a fixed endogenous state (Equation~\eqref{eqn:noise}).
Since we prefer additive dynamics for theoretical simplicity, we express both components as simple differences in observation space.
Although this decomposition is derived under the linear setting, the underlying principle---that action-relevant and action-irrelevant changes are observed through distinct state components---is not specific to linearity and is expected to hold more broadly.

With Assumption \ref{assmp:exbmdp_to_linear_lam}, we have
\begin{align*}
    \phi_\xi(a) + \varepsilon &= \cancel{h(s', \xi)} - h(s, \xi) + h(s',\xi') - \cancel{h(s', \xi)}\\ 
    &= h(s', \xi') - h(s, \xi)\\
    &= o' - o
\end{align*}
Therefore, linear LAM is induced by Ex-BMDP under Assumption \ref{assmp:exbmdp_to_linear_lam}.
Figure~\ref{fig:overview} summarizes this decomposition.


\begin{figure}[!t]
    \centering
    \includegraphics[width=\linewidth]{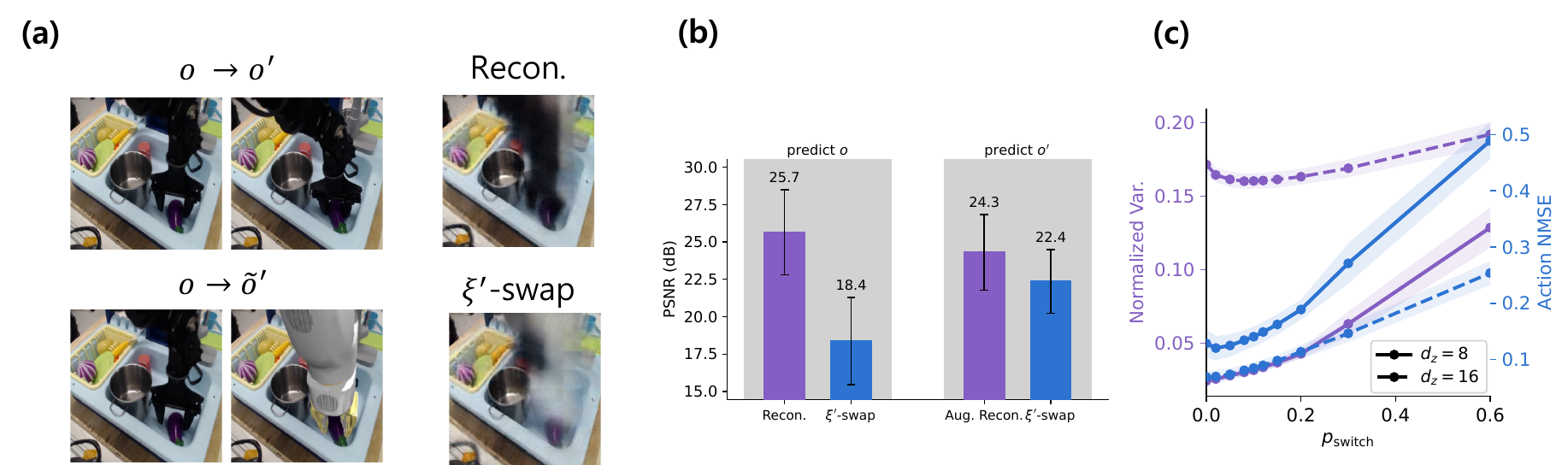}
    \caption{\textbf{Future exogenous state leaks into latent actions.}
    \textbf{(a)} Predicted next observation under the \texttt{Recon.} and $\xi'$-\texttt{swap}. 
    \textbf{(b)} PSNR of three settings. 
    $\xi^\prime$-\texttt{swap} drops sharply when targeting $o^\prime$ but matches \texttt{Aug. Recon.} when targeting $\tilde{o}^\prime$.
    \textbf{(c)} Normalized variance $\mathrm{Var}_{\xi^\prime}(z|s, \xi, a)/\|z\|^2$ and action NMSE as functions of $p_\mathrm{switch}$. Both metrics rise together as $p_\mathrm{switch}$ grows, indicating that $z$ encodes $\xi^\prime$ at the cost of action capacity.
    }
    \label{fig:analysis1}
\end{figure}

\subsection{Analysis 1: Future exogenous state leaks into latent action}\label{sec:future_leak}
A central question is what information the latent action $z$ encodes while minimizing $\mathcal{L}_\mathrm{LAM}$.
Previous work shows that $z$ captures both controllable changes and exogenous noise~\cite{linear_lam}.
Ideally, $z$ should encode only the action-induced change between $o$ and $o^\prime$, and any factor irrelevant to the action reduces $z$'s capacity for action-relevant information.
In this section, we revisit this issue under the Ex-BMDP framework.
The key insight is that $\psi_\mathrm{FDM}$ must reconstruct $o^\prime$ from $(o, z)$ while $\xi^\prime$ is observable only through $o^\prime$, so any $\xi^\prime$-dependent information must be carried by $z$.
We formalize this leakage below.

\begin{proposition}\label{prop:context_affect}
For consecutive observations $o = h(s, \xi)$ and $o' = h(s', \xi')$ with $\xi \neq \xi'$, 
no global minimizer of $\mathcal{L}_\mathrm{LAM}$ in linear LAM admits a latent action that is $\xi'$-independent. 
\end{proposition}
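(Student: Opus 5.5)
We argue by contradiction. The idea is to compare the loss of any linear LAM whose latent action does not depend on $\xi'$ with the loss of an explicitly constructed competitor, and show that the competitor is strictly better.

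\textbf{Step 1 (bound the $\xi'$-independent loss).} Suppose $\theta=(A,B,C,D)$ is a global minimizer with $z=Co+Do'$ independent of $\xi'$ given $(s,\xi,a)$. Then $\hat o'=Ao+Bz$ is a function of $(s,\xi,a)$ alone. By Assumption~\ref{assmp:exbmdp_to_linear_lam}, $o'=o+\phi_\xi(a)+\varepsilon$ with $\varepsilon=h(s',\xi')-h(s',\xi)$. Conditioning on $(s,\xi,a)$ and using the tower property gives
\begin{equation*}
\mathbb{E}\|o'-\hat o'\|^2 \;\ge\; \mathbb{E}\bigl[\mathrm{tr}\,\mathrm{Cov}(\varepsilon\mid s,\xi,a)\bigr] \;>\; 0.
\end{equation*}
Strict positivity holds because $\xi\neq\xi'$ occurs with positive probability under $T_\xi$ and $h(s',\cdot)$ is nonconstant in $\xi$. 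Without this, $\varepsilon\equiv 0$ and the statement would be vacuous, so we adopt it as the nondegeneracy reading of the hypothesis.

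\textbf{Step 2 (construct a better competitor).} Since the constraint $z\perp\xi'$ forces $D$ to annihilate the $\xi'$-dependent part of $o'$, we build a strictly better $\theta$. The plan is to use the standard linear-LAM solution: the optimum is rank-$d_z$ PCA of the residual $o'-Ao$. Concretely, take $A=I$, $C=-D$, and $B=U$, $D=U^\top$, where $U$ spans the top-$d_z$ eigenvectors of $\Sigma=\mathrm{Cov}(\phi_\xi(a)+\varepsilon)$. This competitor's loss is the sum of the trailing eigenvalues of $\Sigma$.

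\textbf{Step 3 (compare, the main obstacle).} We must show the competitor's loss is strictly below the loss of the best $\xi'$-independent encoder. Any $\xi'$-independent $D$ has rows orthogonal to the variation directions $\mathrm{span}\{h(s',\xi')-h(s',\tilde\xi')\}$. The argument is perturbative. Start from the constrained optimum $\theta^\star$ and let $r=o'-\hat o'$ be its residual, whose conditional covariance includes $\mathrm{Cov}(\varepsilon\mid\cdot)\neq0$. Perturb $D\to D+\eta v u^\top$, where $u$ is a direction with $\mathbb{E}[\langle u,\varepsilon\rangle\, r]\neq 0$, and adjust $B$ by $\eta$ in the matching direction $w$. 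The first-order change of the loss is $-2\eta\,\mathbb{E}[\langle Bv\text{-term}\rangle]$; choosing signs appropriately makes it negative. So $\theta^\star$ is not even a stationary point, let alone a global minimizer.

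The delicate case is when $d_z$ is already fully used on action directions. There we must argue that some capacity can be profitably traded: the second-order term is $O(\eta^2)$ while the gain is linear in $\eta$. That makes the trade profitable whenever the cross-covariance $\mathbb{E}[r\,\varepsilon^\top]$ is nonzero. This holds because $r$ contains $\varepsilon$ itself, which is uncorrelated with $(s,\xi,a)$-measurable terms after centering, so $\mathbb{E}[r\varepsilon^\top]\supseteq\mathrm{Cov}(\varepsilon\mid\cdot)\neq0$.

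Any global minimizer therefore has $D$ with a nonzero component along the $\xi'$ variation, and hence $z$ depends on $\xi'$.
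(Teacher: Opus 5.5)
\textbf{Verdict: genuine gap.} Step~3 does not go through, and the statement cannot be proved without an extra nondegeneracy hypothesis.

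\textbf{Where Step~3 fails.} Perturbing $D\to D+\eta vu^\top$ changes the loss by $-2\eta\,v^\top B^\top\mathbb{E}[r\,o'^\top]u+O(\eta^2)$. A first-order gain therefore needs the $\xi'$-varying part of the residual to have a nonzero component inside $\mathrm{range}(B)$. Your justification only gives $\mathbb{E}[r\varepsilon^\top]\neq 0$, which says nothing about $\mathrm{range}(B)$. Similarly, Step~1 uses the full $\mathrm{Cov}(\varepsilon\mid s,\xi,a)$ rather than its projection onto $\mathrm{range}(B)$.

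In your delicate case, $\mathrm{range}(B)$ misses the $\xi'$-variation, so opening a new channel also requires $B\to B+\eta wv^\top$. The term that actually reconstructs $\varepsilon$ is then $\eta^2\|v\|^2 wu^\top$. It is second order, the same order as the lost action capacity. So the trade is an eigenvalue comparison, not ``linear gain versus quadratic cost'', and it can go the wrong way.

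\textbf{Counterexample.} Take all variables centered, and let $h(s,\xi)=H_0 s+g(\xi)$ with $\mathrm{range}(g)\perp\mathrm{range}(H_0)$ and $H_0$ injective. Let $s'=s+a$ with $a$ independent of $(s,\xi)$, $d_z=d_a$, and suppose the $d_a$ nonzero eigenvalues of $H_0\,\mathrm{Cov}(a)H_0^\top$ exceed every eigenvalue of the exogenous part. Since $A+BC$ is unconstrained, the loss reduces to $\mathbb{E}\|(I-BD)e\|^2$, where $e$ is the residual of linearly regressing $o'$ on $o$. Here $\mathrm{Cov}(e)$ is block-diagonal and its top $d_z$ eigenvectors span $\mathrm{range}(H_0)$. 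Hence every global minimizer has the rows of $D$ in $\mathrm{range}(H_0)$, and so has a $\xi'$-independent $z$. In this example your Step~2 PCA competitor is itself $\xi'$-independent, so Step~2 cannot give a strict improvement either.

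\textbf{How the paper proceeds.} The paper's proof isolates exactly the condition that is needed and uses neither perturbations nor PCA. It keeps $A^\star,B^\star$ fixed and lets $\Pi$ be the orthogonal projection onto $\mathrm{range}(B^\star)$. It then swaps in the encoder $\tilde C=-B^{\star\dagger}A^\star$, $\tilde D=B^{\star\dagger}$, so that $B^\star\tilde z=\Pi(o'-A^\star o)$. Splitting the residual along $\mathrm{range}(B^\star)$ and its complement, this competitor attains $\mathbb{E}\|(I-\Pi)(o'-A^\star o)\|^2$. Any $\xi'$-independent $z^\star$ pays an extra $\mathbb{E}[\mathrm{Var}_{\xi'}(\Pi o'\mid s,\xi,a,s')]$, because $\Pi A^\star o+B^\star z^\star$ cannot vary with $\xi'$. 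Strictness is then precisely the assumption that the $\xi'$-dependent part of $o'$ projects nontrivially onto $\mathrm{range}(B^\star)$. The paper takes this from the high-exogenous-energy regime stated at the start of Section~\ref{sec:theory}.

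Your first-order idea, restricted to the encoder, is a local version of this argument. For fixed $(A,B)$ the loss is a convex quadratic in $(C,D)$, so local and global optimality coincide, and the idea works under the same hypothesis. What fails is the attempt to dispense with that hypothesis.
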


See Appendix~\ref{proof:context_affect} for the proof.
Proposition~\ref{prop:context_affect} formalizes the \emph{future leakage} empirically observed in recent LAMs~\cite{laof, vlajepa, como}, where $z$ absorbs information from the future frame instead of encoding the transition dynamics.
A $\xi^\prime$-dependent $z$ drives this leakage by allowing $\psi_\mathrm{FDM}$ to take a shortcut, copying the future frame rather than predicting it from the action.
This, in turn, leaves $z$ uninformative about what has changed due to the action.

\paragraph{Experimental Settings.}
Motivated by~\citet{lawm}, we measure future leakage by changing the exogenous state of the next observation when encoding latent actions.
Ideally, $\psi_\mathrm{FDM}(o, z)$ should predict the next observation of $o$.
However, if LAM encodes the future observation, it would reconstruct that observation, even when it differs from the actual next observation of $o$.
To validate this in practice, we use LAPA~\cite{lapa} evaluated on Bridge V2~\cite{bridge} with paired transitions from OXE-AugE~\cite{oxeauge}, which augments the original trajectories in Open X-Embodiment~\cite{oxe} with simulated robot embodiments while preserving the underlying action.
Here, $o = h(s, \xi)$ and $\tilde{o} = h(s, \tilde{\xi})$ share the same state $s$ but differ in the exogenous state $\xi\neq\tilde{\xi}$ (different embodiments).
We compare three cases:
(i) \texttt{Recon.}: $\psi_\mathrm{FDM}(o, z)$ with $z = \psi_\mathrm{IDM}(o, o^\prime)$,
(ii) \texttt{Aug.~Recon.}: $\psi_\mathrm{FDM}(\tilde{o}, \tilde{z})$ with $\tilde{z} = \psi_\mathrm{IDM}(\tilde{o}, \tilde{o}^\prime)$,
and (iii) $\xi^\prime$-\texttt{swap}: $\psi_\mathrm{FDM}(o, z_\mathrm{cross})$ with $z_\mathrm{cross} = \psi_\mathrm{IDM}(o, \tilde{o}^\prime)$.
If $z_\mathrm{cross}$ did not encode $\tilde{\xi}^\prime$ and $\psi_\mathrm{FDM}$ did not rely on the shortcut through $z$, $\psi_\mathrm{FDM}(o, z_\mathrm{cross})$ would predict $o^\prime$, not $\tilde{o}^\prime$.

\paragraph{Results.}
As Figure~\ref{fig:analysis1}(a) shows, $\xi'$-\texttt{swap} outputs a frame resembling $\tilde{o}^\prime$ rather than $o^\prime$, despite being conditioned on $o$.
Quantitatively, as shown in Figure~\ref{fig:analysis1}(b), $\xi^\prime$-\texttt{swap} drops $7.3$ dB from \texttt{Recon.} in predicting $o^\prime$ but only $1.9$ dB from \texttt{Aug.~Recon.} in predicting $\tilde{o}^\prime$.
The $1.9$ dB gap is comparable to the $1.4$ dB drop from \texttt{Recon.} to \texttt{Aug.~Recon.}, showing that the degradation under $\xi^\prime$-\texttt{swap} is not attributable to distribution shift from the augmentation.
This confirms that LAM encodes the next exogenous state as part of $z$, which $\psi_\mathrm{FDM}$ then uses as a visual shortcut by copying the future frame into its prediction.
This in turn incentivizes $z$ to encode even more future information during training.
In Figure~\ref{fig:analysis1}(c), we train a linear LAM with varying $p_\mathrm{switch}$ and report the normalized variance $\mathrm{Var}_{\xi^\prime}(z|s, \xi, a) / \|z\|^2$ alongside action NMSE.
Both metrics grow with $p_\mathrm{switch}$, confirming that latent actions encode $\xi^\prime$ at the expense of action-relevant capacity, as predicted by Proposition~\ref{prop:context_affect}.

\begin{figure}[!t]
    \centering
    \includegraphics[width=\textwidth]{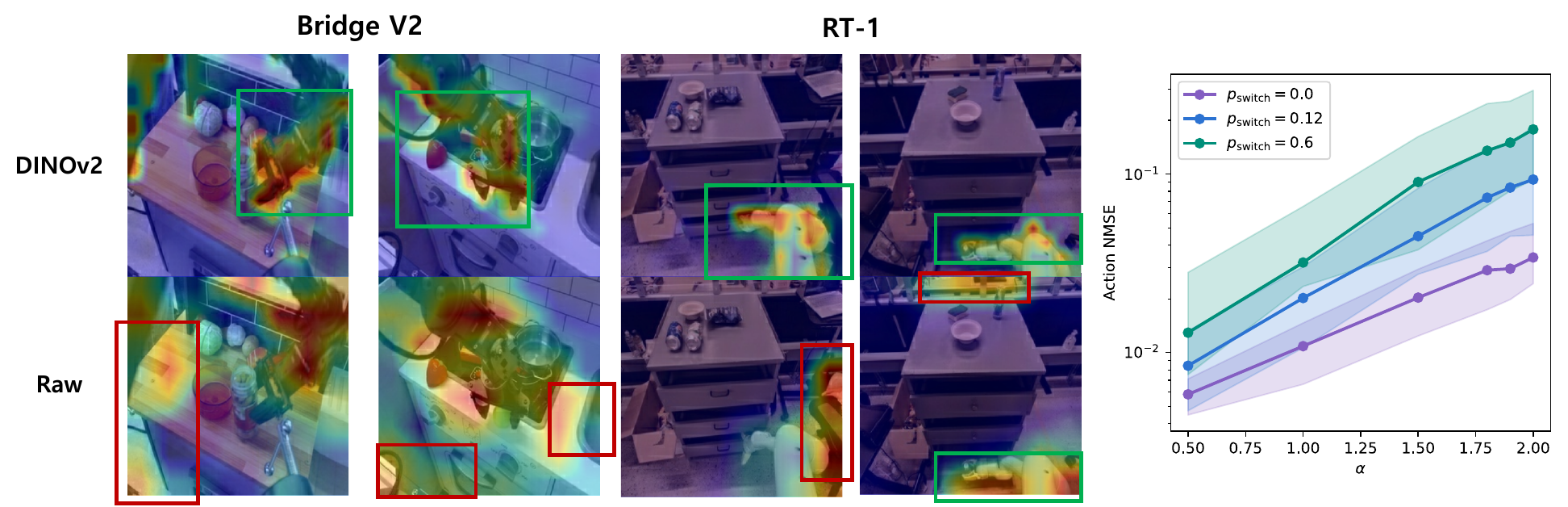}
    \caption{\textbf{Exogenous state sensitivity of vision features impacts action alignments.} \textbf{(Left)} Attention maps on Bridge V2 and RT-1 for LAMs trained with DINOv2 (top, UniVLA~\cite{univla}) versus raw-image observations (bottom, LAPA~\cite{lapa}). DINOv2 attends to manipulation-relevant regions (\textcolor{green}{green} boxes), while raw-image LAMs often attend to background or manipulation-irrelevant factors (\textcolor{red}{red} boxes). \textbf{(Right)} In the linear LAM, action NMSE increases with $\alpha$ and with higher $p_\mathrm{switch}$. 
    }
    \label{fig:analysis2}
\end{figure}

\subsection{Analysis 2: Exogenous variation and sensitivity of vision encoder}\label{sec:exo_energy}
Given prior theoretical results that linear LAM behaves like principal component analysis (PCA)~\cite{linear_lam}, we now analyze \textit{which factors determine the energy of exogenous noise}.
\begin{remark}
For any fixed $s' \in \mathcal{S}$, the conditional second moment of $\varepsilon$ decomposes as
\begin{equation}
    \mathbb{E}\!\left[\|\varepsilon\|_2^2 \mid s'\right]
    =
    \underbrace{\mathbb{P}\!\left[\xi'\neq \xi \mid s'\right]}_{\text{frequency of }\xi \text{-changes}}\;
    \underbrace{\mathbb{E}\!\left[\|h(s',\xi') - h(s',\xi)\|_2^2 \mid s',\, \xi'\neq \xi\right]}_{\text{sensitivity of the vision encoder to } \xi \text{-change}}.
    \label{eqn:var_eps}
\end{equation}
\end{remark}
Equation~\eqref{eqn:var_eps} decomposes exogenous energy into two factors: (i) $\xi$-switching frequency and (ii) representation sensitivity to exogenous state at a fixed state.
Since the $\xi$-switching frequency $\mathbb{P}[\xi'\neq \xi | s']$ is defined by the environment itself, we focus on the representation sensitivity to exogenous factors.
Define the $\xi$-sensitivity $\delta_h$ of emission function $h$ as follows.
\begin{equation}
    \delta_h(s) = \sup_{\ \xi, \tilde{\xi}\in \Xi} \lVert h(s, \tilde{\xi}) - h(s, \xi) \rVert^2
\end{equation}
Then, the second moment of $\varepsilon$ satisfies
\begin{equation}
    \mathbb{E}[\lVert \varepsilon\rVert^2|s'] \le \mathbb{P}[\xi'\neq \xi|s']\delta_h (s')
    \label{eqn:noise_energy_sensitivity}
\end{equation}
As Equation~\eqref{eqn:noise_energy_sensitivity} indicates, reducing the $\xi$-sensitivity of the vision encoder reduces the energy of exogenous noise.
We empirically verify this in Figure~\ref{fig:analysis2}.
As Figure~\ref{fig:analysis2} (left) shows, DINOv2~\cite{dinov2} often provides an object-centric representation that is more robust to $\xi$, and therefore LAM trained on DINOv2 focuses on manipulation-relevant parts.
By contrast, LAM trained on raw images often fails to focus on manipulation-relevant features.
Accordingly, in linear LAM, Figure~\ref{fig:analysis2} (right) shows that action NMSE increases with the $\xi$-sensitivity of the vision encoder.
Here, we use $\alpha$ as a proxy for $\xi$-sensitivity.
These results highlight that the choice of vision encoder $f$ is important for latent action learning, motivating the use of representations with low $\xi$-sensitivity rather than raw pixels~\cite{univla, como, srl, stamo, mae}.

In the remainder of this section, we analyze how to mitigate degradation in action-alignment induced by the exogenous state.
For theoretical simplicity, we adopt the following moment assumptions.
\begin{assumption}\label{assmp:orthogonal}
For any pair of distinct exogenous states $\xi \neq \tilde\xi \in \Xi$, let
$o = h(s,\xi)$, $q_\xi = h(s', \xi) - h(s,\xi)$, $\varepsilon = h(s', \xi') - h(s', \xi)$,
$\tilde{o} = h(s,\tilde{\xi})$, $q_{\tilde{\xi}} = h(s',\tilde{\xi}) - h(s,\tilde{\xi})$, and
$\tilde{\varepsilon} = h(s',\tilde{\xi}') - h(s',\tilde{\xi})$.
Then,
\begin{equation}
\mathbb{E}[o q_\xi^T] = \mathbb{E}[o \varepsilon^T] = 
\mathbb{E}[o q_{\tilde{\xi}}^T] = \mathbb{E}[o \tilde{\varepsilon}^T] = 0.
\end{equation}
\end{assumption}
Under Assumption~\ref{assmp:orthogonal}, we consider two practical settings for learning latent actions that align with actions: (i) we have paired observations that share the same underlying state across different exogenous state, or (ii) we have observation-level labels that are robust to exogenous state.

\begin{figure}[!t]
    \centering
    \includegraphics[width=\textwidth]{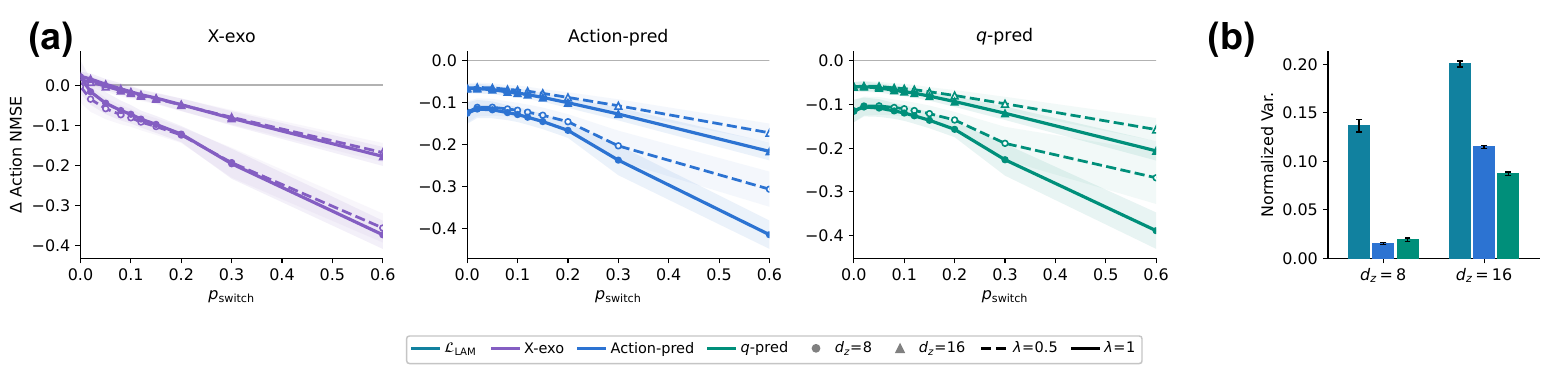}
    \caption{\textbf{Effect of $\mathcal{L}_\mathrm{X\text{-}exo}$ and $\mathcal{L}_{\xi\mathrm{\text{-}robust}}$.} \textbf{(a)} $\Delta$ action-validation NMSE relative to $\mathcal{L}_\mathrm{LAM}$ (negative values indicate improvement) for three auxiliary training objectives: (left) X-exo, (mid) Action-pred, and (right) $q$-pred. 
    \textbf{(b)} Normalized variance of latent action within context (lower is better), measuring how consistent the latent actions are.}
    \label{fig:analysis3}
\end{figure}

\subsection{Analysis 3: Cross-exogenous reconstruction learns CCA}\label{sec:cross_exo}

First, we assume access to synchronized transition tuples $(s,s')$ observed under two exogenous states $\xi$ and $\tilde{\xi}$.

\begin{definition}
Let $o = h(s,\xi)$ and $\tilde{o} = h(s,\tilde{\xi})$.
We define the cross-exogenous reconstruction objective as follows:
\begin{equation}
    \mathcal{L}_\mathrm{X\text{-}exo}(\theta) = \mathbb{E}[\lVert o' - \psi_\mathrm{FDM}(o, \tilde{z}) \rVert^2], \quad \tilde{z} = \psi_\mathrm{IDM}(\tilde{o}, \tilde{o}').
\end{equation}
\end{definition}
Under the linear LAM assumptions, the following proposition identifies the minimizer of this objective and clarifies its connection to canonical correlation analysis (CCA).
\begin{proposition}\label{prop:x_context}
    In the linear LAM setting, define $u = q_\xi +\varepsilon$ and $\tilde{u} = q_{\tilde{\xi}} + \tilde{\varepsilon}$.
    Then, the cross-exogenous reconstruction objective reduces to
    \begin{equation}
        \mathcal{L}_\mathrm{X\text{-}exo}(\theta) = \mathbb{E}\left[\lVert u - BD\tilde{u}\rVert^2\right].
    \end{equation}
    In particular, if $\Sigma_{uu} = \Sigma_{\tilde{u}\tilde{u}} = I$, minimizing $\mathcal{L}_\mathrm{X\text{-}exo}$ reduces to CCA between $u$ and $\tilde{u}$.
\end{proposition}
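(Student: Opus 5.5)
The plan is to substitute the linear forms of $\psi_\mathrm{FDM}$ and $\psi_\mathrm{IDM}$ into $\mathcal{L}_\mathrm{X\text{-}exo}$, use the additive dynamics, and eliminate the parameters that only couple to uncorrelated terms.

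\textbf{Step 1: substitute.} By Assumption~\ref{assmp:exbmdp_to_linear_lam}, $o' = o + u$ and $\tilde{o}' = \tilde{o} + \tilde{u}$. Hence
\[
\tilde z = C\tilde o + D(\tilde o + \tilde u) = (C+D)\tilde o + D\tilde u ,
\]
and the residual becomes
\[
o' - Ao - B\tilde z = (I-A)o - B(C+D)\tilde o + u - BD\tilde u .
\]

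\textbf{Step 2: remove the terms in $o$ and $\tilde o$.} Expand the squared norm. The cross terms between $\{o,\tilde o\}$ and $\{u,\tilde u\}$ vanish by Assumption~\ref{assmp:orthogonal}: $\mathbb{E}[o q_\xi^T]=\mathbb{E}[o\varepsilon^T]=\mathbb{E}[o q_{\tilde\xi}^T]=\mathbb{E}[o\tilde\varepsilon^T]=0$. The statement only lists moments against $o$, so I would need the analogous identities with $\tilde o$ in place of $o$. I would obtain them either by the symmetry $\xi\leftrightarrow\tilde\xi$ in the assumption (it is stated for every pair of distinct exogenous states) or by reading the assumption as applying to both observations. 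The loss then splits as
\[
\mathcal{L}_\mathrm{X\text{-}exo}(\theta) = \mathbb{E}\bigl\|(I-A)o - B(C+D)\tilde o\bigr\|^2 + \mathbb{E}\|u - BD\tilde u\|^2 .
\]
The first term is nonnegative and is made zero by choosing $A=I$ and $C=-D$. These choices leave $BD$ unconstrained, so $A$ and $C$ can be optimized out independently. The objective therefore reduces to $\mathbb{E}\|u-BD\tilde u\|^2$ over $(B,D)$, where $BD$ is a rank-$\le d_z$ matrix $W$. This cleanness is the step I expect to be the only delicate point, since it depends entirely on the orthogonality bookkeeping for $\tilde o$.

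\textbf{Step 3: reduce to CCA.} Write $W=BD$. Then
\[
\mathbb{E}\|u-W\tilde u\|^2 = \operatorname{tr}\Sigma_{uu} - 2\operatorname{tr}(W\Sigma_{\tilde u u}) + \operatorname{tr}(W\Sigma_{\tilde u\tilde u}W^T).
\]
With $\Sigma_{uu}=\Sigma_{\tilde u\tilde u}=I$ this becomes
\[
d_o - 2\operatorname{tr}(W\Sigma_{\tilde u u}) + \|W\|_F^2 = d_o + \|W-\Sigma_{u\tilde u}\|_F^2 - \|\Sigma_{u\tilde u}\|_F^2 .
\]
Minimizing over rank-$d_z$ matrices $W$ is therefore a best low-rank approximation of the cross-covariance $\Sigma_{u\tilde u}$. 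By Eckart--Young, the optimal $W$ is the truncated SVD $U_{d_z}S_{d_z}V_{d_z}^T$. Because both inputs are whitened, the singular values of $\Sigma_{u\tilde u}$ are exactly the canonical correlations, and the singular vectors are the canonical directions. So $D$ spans the top-$d_z$ canonical directions of $\tilde u$, up to an invertible reparametrization absorbed into $B$, and $B$ maps onto the corresponding canonical directions of $u$. This is CCA between $u$ and $\tilde u$.
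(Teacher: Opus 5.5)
Your proposal is correct and follows the paper's proof essentially step for step. You substitute the linear maps and additive dynamics, drop the cross terms via Assumption~\ref{assmp:orthogonal}, set $A=I$ and $C=-D$, complete the square to $\|BD-\Sigma_{u\tilde u}\|_F^2$ plus a constant, and apply Eckart--Young, where under whitening the SVD gives the canonical directions. Your point that $\mathbb{E}[\tilde o u^T]=\mathbb{E}[\tilde o \tilde u^T]=0$ follows from applying the assumption with the roles of $(\xi,\xi')$ and $(\tilde\xi,\tilde\xi')$ swapped is something the paper uses without stating, so it is a useful clarification rather than a departure.
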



See proof in Appendix \ref{proof:x_context}.
Proposition~\ref{prop:x_context} shows that, under whitening, minimizing the cross-exogenous reconstruction objective is equivalent to CCA, which keeps transition directions that carry large energy under both $\xi$ and $\tilde{\xi}$ simultaneously. 
Transition components that have high energy under only a specific exogenous state are suppressed, while components that persist across exogenous states---which are more likely to be action-driven---are preserved.
This result is consistent with MVP-LAM~\cite{mvp_lam}, which treats viewpoint as the exogenous state and uses a cross-exogenous reconstruction objective on multi-view data to obtain latent actions well-aligned with ground-truth actions.
To investigate the effectiveness of $\mathcal{L}_\mathrm{X\text{-}exo}$, we train linear LAM with $\mathcal{L}_\mathrm{LAM} + \lambda \mathcal{L}_\mathrm{X\text{-}exo}$ and compare it to linear LAM trained only with $\mathcal{L}_\mathrm{LAM}$.
Figure~\ref{fig:analysis3} (a) (left) shows that $\mathcal{L}_\mathrm{X\text{-}exo}$ consistently improves linear probe action prediction.

\subsection{Analysis 4: Supervision of exogenous-robust target improves consistency}\label{sec:exo_robust}
Prior work has shown that action prediction improves the quality of latent actions and, in turn, downstream performance~\cite{linear_lam, laom}.
Since ground-truth action labels are rarely available at scale, a practical alternative is to predict optical flow obtained from pretrained estimators such as DPFlow~\cite{dpflow}, which requires no additional annotation.
We unify these prediction targets under a single formulation:
\begin{definition}
Let $y:\mathbb{R}^{d_o}\times\mathbb{R}^{d_o}\to\mathbb{R}^{d_y}$ be a target label of each observation pair.
We say that $y$ is $\eta$-robust to the exogenous state if, for any $(s, s') \in \mathcal{S}^2$ and any $\xi, \xi', \tilde{\xi}, \tilde{\xi}' \in \Xi$,
\begin{equation}
\left\|y(o, o') - y(\tilde{o}, \tilde{o}')\right\|^2 \le \eta,
\end{equation}
where $o = h(s, \xi)$, $o' = h(s', \xi')$, $\tilde{o} = h(s, \tilde{\xi})$, and $\tilde{o}' = h(s', \tilde{\xi}')$.
Typical examples include action labels and optical flow.
Given such a target, we define the exogenous-robust prediction objective
\begin{equation}
\mathcal{L}_{\xi\mathrm{\text{-}robust}}(\theta_\mathrm{IDM}, W; y)
= \mathbb{E}\!\left[\left\|y(o,o') - W\psi_\mathrm{IDM}(o,o')\right\|^2\right].
\end{equation}
where $\theta_\mathrm{IDM}$ denotes the parameters of $\psi_\mathrm{IDM}$ ($\theta_\mathrm{IDM} = (C,D)$ if linear LAM), and $W \in \mathbb{R}^{d_y \times d_z}$.
\end{definition}
The following proposition shows that using a supervision target $y$ with a small $\eta$ improves the consistency of latent actions across exogenous factors.

\begin{proposition}\label{prop:context_robust_prediction}
In the linear LAM setting, the exogenous-robust prediction objective with $\eta$-robust prediction target $y$ satisfies the following inequality:
\begin{equation}
    \mathbb{E}[\lVert W(\psi_\mathrm{IDM}(o, o') - \psi_\mathrm{IDM}(\tilde{o}, \tilde{o}')) \rVert^2] \le 6\mathcal{L}_{\xi\mathrm{\text{-}robust}}(\theta_\mathrm{IDM}, W;y) + 3\eta
\end{equation}
where $o = h(s,\xi)$ and $\tilde{o} = h(s,\tilde{\xi})$ for two distinct exogenous states $\xi\neq \tilde{\xi}$.
In particular, if $W$ has full column rank (which requires $d_y \ge d_z$), then the latent action $z = \psi_\mathrm{IDM}(o, o')$ becomes consistent across exogenous states as $\mathcal{L}_{\xi\text{-}\mathrm{robust}}$ and $\eta$ decreases.
\end{proposition}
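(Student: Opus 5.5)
The plan is a triangle-inequality decomposition with three terms, which explains the constants $6$ and $3$. Write $z = \psi_\mathrm{IDM}(o,o')$, $\tilde z = \psi_\mathrm{IDM}(\tilde o,\tilde o')$, $y = y(o,o')$ and $\tilde y = y(\tilde o,\tilde o')$. Then split
\begin{equation*}
Wz - W\tilde z = (Wz - y) + (y - \tilde y) + (\tilde y - W\tilde z).
\end{equation*}
Applying $\|a+b+c\|^2 \le 3(\|a\|^2+\|b\|^2+\|c\|^2)$, which follows from Cauchy--Schwarz or convexity, gives
\begin{equation*}
\mathbb{E}\|W(z-\tilde z)\|^2 \le 3\,\mathbb{E}\|y - Wz\|^2 + 3\,\mathbb{E}\|y-\tilde y\|^2 + 3\,\mathbb{E}\|\tilde y - W\tilde z\|^2 .
\end{equation*}

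The middle term is controlled pointwise by the $\eta$-robustness definition. The pair $(o,o')$ and the pair $(\tilde o,\tilde o')$ share $(s,s')$ and differ only in their exogenous states, so $\|y-\tilde y\|^2\le\eta$ almost surely. This contributes $3\eta$.

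The first term is exactly $\mathcal{L}_{\xi\text{-robust}}(\theta_\mathrm{IDM},W;y)$. For the third term, we need the tilde pair to be distributed like the original pair, so that $\mathbb{E}\|\tilde y - W\tilde z\|^2$ is also $\mathcal{L}_{\xi\text{-robust}}$. In the Ex-BMDP, $\mu$ factorizes as $\mu_s\mu_\xi$ and the exogenous dynamics are independent of $(s,a)$ under an endogenous policy. So we couple $\tilde\xi,\tilde\xi'$ as an independent draw from the same exogenous process (i.e.\ $(\tilde\xi,\tilde\xi')\overset{d}{=}(\xi,\xi')$ given $(s,s')$). Then $(\tilde o,\tilde o')\overset{d}{=}(o,o')$, and the third term equals the loss. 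The two loss terms together give $6\mathcal{L}_{\xi\text{-robust}}$.

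The main obstacle is this distributional step. The statement only says $\xi\neq\tilde\xi$, so we must make the sampling of $\tilde\xi$ explicit, or interpret $\mathcal{L}_{\xi\text{-robust}}$ as the expectation over the same joint distribution that also generates $\tilde o$. If exact equality in law fails, I would state the bound with the third term written as the loss on the tilde distribution, noting that it coincides with $\mathcal{L}_{\xi\text{-robust}}$ under the Ex-BMDP factorization. Linearity of the LAM is not actually needed for the inequality.

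For the final claim, suppose $W$ has full column rank. Then $W^\top W \succ 0$, so $\|Wv\|^2 \ge \sigma_{\min}(W)^2\|v\|^2$. Hence
\begin{equation*}
\mathbb{E}\|z-\tilde z\|^2 \le \sigma_{\min}(W)^{-2}\bigl(6\mathcal{L}_{\xi\text{-robust}}+3\eta\bigr),
\end{equation*}
which vanishes as $\mathcal{L}_{\xi\text{-robust}}$ and $\eta$ decrease, provided $\sigma_{\min}(W)$ stays bounded away from zero.
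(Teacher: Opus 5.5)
Your proposal is correct and follows the paper's proof essentially step for step. Both use the same three-term split with the constant-$3$ inequality, bound the middle term pointwise by $\eta$-robustness, identify both residual terms with $\mathcal{L}_{\xi\text{-}\mathrm{robust}}$, and finish with the smallest-singular-value argument for full-column-rank $W$. Two of your additions are worthwhile refinements of the paper: you justify that $(\tilde o,\tilde o')$ has the same law as $(o,o')$ via the Ex-BMDP factorization and the endogenous policy, where the paper simply asserts this ``by construction,'' and you note that $\sigma_{\min}(W)$ must stay bounded away from zero.
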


See proof in Appendix \ref{proof:context_robust_prediction}.
Proposition~\ref{prop:context_robust_prediction} shows that predicting an $\eta$-robust target yields an explicit bound on latent action mismatch across the exogenous state.
Thus, minimizing $\mathcal{L}_{\xi\mathrm{\text{-}robust}}$ with a smaller $\eta$ encourages $\psi_\mathrm{IDM}$ to produce latent actions that are consistent across exogenous states.

To show how $\mathcal{L}_{\xi\mathrm{\text{-}robust}}$ reduces the effect of exogenous states, we train a linear LAM with $\mathcal{L}_\mathrm{LAM} + \lambda \mathcal{L}_{\xi\mathrm{\text{-}robust}}$.
For an $\eta$-robust target, we used the ground-truth action $a$ and controllable changes $q_\xi$.
Figure~\ref{fig:analysis3} (a) (middle and right) shows that using action $a$ and controllable change $q_\xi$ as prediction targets $y$ (refers to Action-pred and $q$-pred, respectively) consistently improves action alignment.
In addition, Figure~\ref{fig:analysis3} (b) shows the normalized variance $\mathrm{Var}_{\xi, \xi'}(z|s,a)/\lVert z\rVert^2$.
This implies that the $\mathcal{L}_{\xi\text{-}\mathrm{robust}}$ objective encourages latent actions to be consistent across $\xi$.

\section{Practical LAM}\label{sec:practical_lam}
Our theoretical analyses rely on Assumptions~\ref{assmp:exbmdp_to_linear_lam} and~\ref{assmp:orthogonal}, which are introduced for theoretical tractability.
To investigate whether our theoretical insights extend beyond the linear setting, we conduct experiments with \textit{non-additive transition dynamics} and \textit{nonlinear LAMs}.

\paragraph{Dataset.}
Motivated by \citet{linear_lam}, we construct a $4 \times 4$ grid-world dataset to evaluate practical LAM in a controlled nonlinear setting.
The top three rows contain a controllable square that moves according to one of four actions (left, right, up, down), while the bottom row contains pixel values drawn independently as $\mathrm{Bernoulli}(0.5) \cdot \sigma$ at each transition to mimic exogenous states, where $\sigma$ controls the noise intensity.
We collect transitions by following a deterministic policy that traces a snaking path through the controllable region (\textcolor[HTML]{845EC2}{purple} arrow in Figure~\ref{fig:practical_lam}(a)), so that the action is fully determined by the current observation.
This induces a strong correlation between $o$ and $a$, which violates Assumption \ref{assmp:orthogonal}.
Furthermore, the pixel-level multiplicative exogenous noise breaks Assumption \ref{assmp:exbmdp_to_linear_lam} on additive transition dynamics.
We sweep $\sigma$ to cover different noise regimes.

\begin{figure}[!t]
    \centering
    \includegraphics[width=\textwidth]{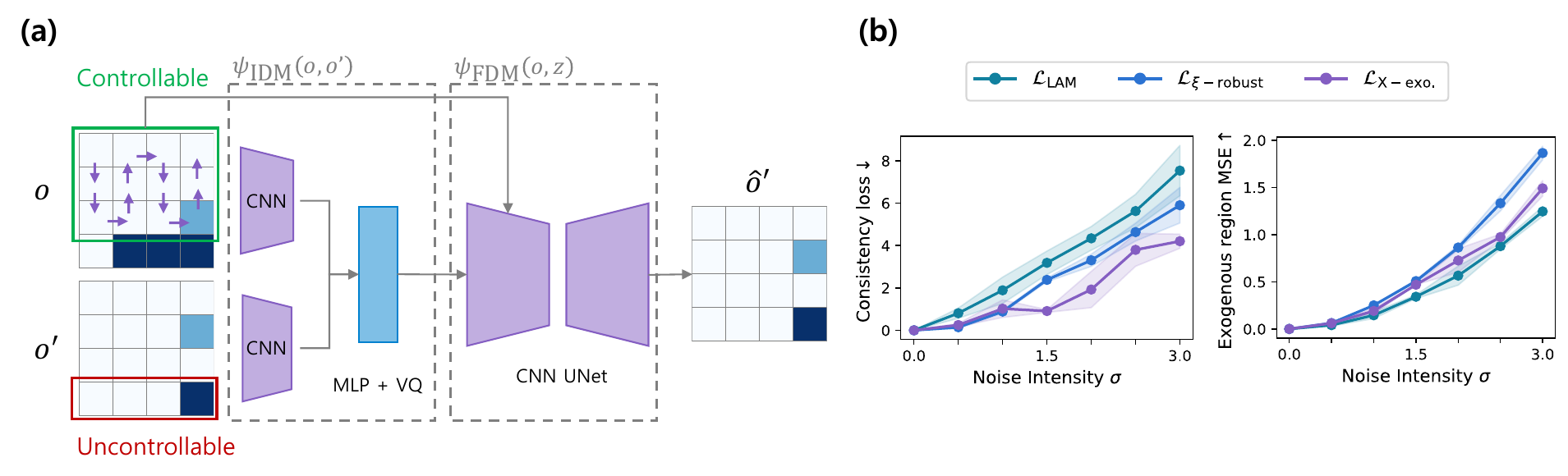} 
    \caption{\textbf{Practical LAM.}
    \textbf{(a)} Architecture of practical LAM. $\psi_\mathrm{IDM}$ is implemented with CNN and MLP layers with vector quantization (VQ), while $\psi_\mathrm{FDM}$ uses a UNet architecture with CNN layers.
    \textbf{(b)} Result of practical LAM. \textit{(Left)}: Consistency loss; lower is better. \textit{(Right)}: Exogenous region MSE; higher is better.
    Shaded regions show standard deviation across 3 random seeds.
    }
    \label{fig:practical_lam}
\end{figure}

\paragraph{Model.}
Following common practice in latent action learning~\cite{lapa, lapo}, we implement $\psi_\mathrm{IDM}$ and $\psi_\mathrm{FDM}$ as a VQ-VAE, where $\psi_\mathrm{IDM}$ consists of CNN and MLP layers followed by vector quantization, and $\psi_\mathrm{FDM}$ is a CNN-based UNet (Figure~\ref{fig:practical_lam}(a)).
We provide implementation and training details of $\psi_{\mathrm{IDM}}$ and $\psi_{\mathrm{FDM}}$ in Appendix \ref{sec:practical_lam_detail}.

We compare a LAM trained with $\mathcal{L}_\mathrm{LAM}$ against LAMs trained with $\mathcal{L}_\mathrm{LAM} + \mathcal{L}_\mathrm{X\text{-}exo}$ and $\mathcal{L}_\mathrm{LAM} + \mathcal{L}_{\xi\text{-}\mathrm{robust}}$.
For $\mathcal{L}_\mathrm{X\text{-}exo}$, we sample paired observations under two distinct exogenous states at the same noise intensity.
For $\mathcal{L}_{\xi\text{-}\mathrm{robust}}$, we use the ground-truth action $a$ as the $\eta$-robust target.

\paragraph{Evaluation.}
To investigate whether our theoretical analysis carries over to the general latent action learning, we measure how much latent actions are corrupted by exogenous states using two metrics: (i) the pixel MSE of the uncontrollable region (\textit{Exogenous region MSE}), and (ii) the latent action MSE across exogenous states (\textit{Consistency loss}).
A higher Exogenous region MSE means the latent action does not predict the exogenous region (and thus does not encode it), while a lower Consistency loss means latent actions are consistent across exogenous states.
We report the mean and standard deviation across 3 random seeds for each configuration.

\paragraph{Result.}
Figure \ref{fig:practical_lam} (b) reports the Consistency loss (left) and Exogenous region MSE (right) as we sweep the noise intensity $\sigma$.
At $\sigma = 0$, where no exogenous noise is present, all three LAMs perform comparably.
As $\sigma$ increases, the vanilla LAM trained with $\mathcal{L}_\mathrm{LAM}$ progressively encodes exogenous information into the latent action, yielding a low Exogenous region MSE and a high Consistency loss.

In contrast, augmenting $\mathcal{L}_\mathrm{LAM}$ with either $\mathcal{L}_{\xi\text{-}\mathrm{robust}}$ or $\mathcal{L}_\mathrm{X\text{-}exo}$ keeps the Exogenous region MSE high and the Consistency loss low across all noise levels.
This indicates that auxiliary objectives in the practical LAM induce latent actions that remain consistent across exogenous variations and do not predict the exogenous region.
These results suggest that the conclusions from linear LAM extend to the practical nonlinear LAM setting where both Assumption \ref{assmp:exbmdp_to_linear_lam} and Assumption \ref{assmp:orthogonal} are violated.
We provide additional experiments on the Distracting Control Suite~\cite{dcs} in Appendix~\ref{sec:dcs_practical_lam}, where the same trends hold under continuous control with color and viewpoint perturbations.

\section{Conclusion}\label{sec:conclusion}
\paragraph{Limitations and Future work.}
A limitation of our work is that our theoretical results rely on a linear LAM framework that has a gap to practical LAM.
An important direction for future work is to relax these assumptions.
Another promising direction is to extend our analysis to downstream policy learning, particularly VLA that leverage latent actions pretrained on in-the-wild videos. 

In this paper, we presented a unified theoretical analysis of LAMs under the Ex-BMDP setting.
Our analysis covers two failure modes of the standard latent action learning. 
First, the reconstruction objective induces latent actions that encode exogenous state from future observations, which is empirically known as future leakage. 
Second, the effect of exogenous noise grows when the observation space is sensitive to the exogenous state.
To address these failure modes, we analyzed two families of auxiliary objectives — cross-exogenous reconstruction and exogenous-robust target prediction — and proved that both enforce consistency of latent actions across exogenous states, thereby improving action alignment.
Experiments with linear and nonlinear LAMs support our analysis and indicate that these insights extend beyond the linear setting.

\bibliographystyle{unsrtnat}
\bibliography{preprint/ref.bib}

\clearpage

\appendix
\section{Proof}\label{sec:proof}


\subsection{Proof of Proposition~\ref{prop:context_affect}}\label{proof:context_affect}
\begin{proof}
Suppose $\theta^\star = (A^\star, B^\star, C^\star, D^\star)$ is a global minimizer of $\mathcal{L}_\mathrm{LAM}$ and that $z^\star = C^\star o + D^\star o'$ is $\xi'$-independent.
Let $\Pi$ denote the orthogonal projection onto $\mathrm{range}(B^\star)$, and let $B^{\star\dagger}$ denote the pseudo-inverse of $B^\star$, so that $B^\star B^{\star\dagger} = \Pi$.

Decomposing the residual $o' - A^\star o - B^\star z^\star$ along $\mathrm{range}(B^\star)$ and its orthogonal complement,
\begin{align}\label{eq:decomp}
\mathcal{L}_\mathrm{LAM}(\theta^\star)
= \mathbb{E}\!\left[\left\|(I - \Pi)(o' - A^\star o)\right\|^2\right]
+ \mathbb{E}\!\left[\left\|\Pi o' - \Pi A^\star o - B^\star z^\star\right\|^2\right],
\end{align}
where the cross term vanishes since $B^\star z^\star \in \mathrm{range}(B^\star)$.
Since $o = h(s, \xi)$ and $z^\star$ are both $\xi'$-independent, $\Pi A^\star o + B^\star z^\star$ is a function of $(s, \xi, a, s')$ alone—it does not depend on $\xi'$.
Hence, using $\mathbb{E}\|X - c\|^2 \geq \mathrm{Var}(X)$ for any constant $c$,
\begin{equation}\label{eq:lb}
\mathcal{L}_\mathrm{LAM}(\theta^\star)
\geq \mathbb{E}\!\left[\left\|(I - \Pi)(o' - A^\star o)\right\|^2\right]
+ \mathbb{E}_{s,\xi,a,s'}\!\left[\mathrm{Var}_{\xi'}\!\left[\Pi o^\prime \mid s, \xi, a, s'\right]\right].
\end{equation}

Now consider $\tilde{\theta} = (A^\star, B^\star, \tilde{C}, \tilde{D})$ with $\tilde{C} = -B^{\star\dagger} A^\star$ and $\tilde{D} = B^{\star\dagger}$, so that $B^\star (\tilde{C}o + \tilde{D}o^\prime) = \Pi(o' - A^\star o)$.
Substituting into the decomposition,
\begin{equation}\label{eq:loss_prime}
\mathcal{L}_\mathrm{LAM}(\tilde{\theta})
= \mathbb{E}\!\left[\left\|(I - \Pi)(o' - A^\star o)\right\|^2\right].
\end{equation}
Subtracting \eqref{eq:loss_prime} from \eqref{eq:lb},
\begin{equation}
\mathcal{L}_\mathrm{LAM}(\theta^\star) - \mathcal{L}_\mathrm{LAM}(\tilde{\theta})
\geq \mathbb{E}_{s,\xi,a,s'}\!\left[\mathrm{Var}_{\xi'}\!\left[\Pi o^\prime \mid s, \xi, a, s'\right]\right] > 0,
\end{equation}
where the strict inequality holds since the $\xi'$-dependent component of $h(s', \xi')$ has nonzero projection onto $\mathrm{range}(B^\star)$ (Section~\ref{sec:theory}).
This contradicts the global optimality of $\theta^\star$.

\end{proof}

\subsection{Proof of Proposition~\ref{prop:x_context}}\label{proof:x_context}
\begin{proof}
    \begin{align*}
        o'-\psi_\mathrm{FDM}^\mathrm{Linear}(o, \tilde{z}) &= o' - Ao -B(C\tilde{o} + D\tilde{o}')\\
        &= o+u -Ao -BC\tilde{o} -BD(\tilde{o}+\tilde{u}) \\
        &= (I-A)o -B(C+D)\tilde{o} +u - BD \tilde{u}
    \end{align*}
    Since $\mathbb{E}[ou^T] = \mathbb{E}[o\tilde{u}^T] = \mathbb{E}[\tilde{o}u^T] = \mathbb{E}[\tilde{o}\tilde{u}^T] = 0$, we have
    \begin{align*}
        \mathcal{L}_\mathrm{X\text{-}exo} &= \mathbb{E}[\lVert (I-A)o-B(C+D)\tilde{o}\rVert^2] + \mathbb{E}[\lVert u - BD\tilde{u}\rVert^2]
    \end{align*}
    The choices $A=I$ and $C=-D$ minimize $\mathcal{L}_\mathrm{X\text{-}exo}$.
    Then,
    \begin{equation*}
        \mathcal{L}_\mathrm{X\text{-}exo} = \mathbb{E}[\lVert u - BD\tilde{u} \rVert^2]
    \end{equation*}
    Let $P = BD$ and $\Sigma_{uu} = \Sigma_{\tilde{u}\tilde{u}} = I$.
    Then,
    \begin{align*}
        \mathbb{E}[\lVert u - P\tilde{u} \rVert^2]&=
        \mathrm{Tr}(\Sigma_{uu}) - 2\mathrm{Tr}(P\Sigma_{\tilde{u}u}) + \mathrm{Tr}(P\Sigma_{\tilde{u}\tilde{u}}P^T)\\
        &= d_o - 2\mathrm{Tr}(P\Sigma_{\tilde{u}u}) + \mathrm{Tr}(PP^T)\\
        &= \lVert P-\Sigma_{u\tilde{u}}\rVert_F^2 + \underbrace{d_o - \lVert \Sigma_{u\tilde{u}} \rVert^2_F}_{\text{constant}}
    \end{align*}
    Let $\Sigma_{u\tilde{u}} = \sum_{k=1}^{d_o} \sigma_k u_k v_k^T$ be the SVD.
    Recall that CCA finds pairs of directions $(a_k, b_k)$ along which two random vectors are maximally correlated, subject to each pair being uncorrelated with all previous ones:
    \begin{equation}
        (a_k^\star, b_k^\star) = \underset{\|a_k\|=\|b_k\|=1}{\arg\max}\ \mathrm{corr}(a_k^T u,\; b_k^T \tilde{u})
        \quad \text{s.t.} \quad a_k^T a_j^\star = b_k^T b_j^\star = 0, \quad j = 1,\dots, k-1.
    \end{equation}
    Since $\Sigma_{uu} = \Sigma_{\tilde{u}\tilde{u}} = I$, $\mathrm{corr}(a_k^T u, b_k^T \tilde{u}) = a_k^T \Sigma_{u\tilde{u}} b_k$
    whose solution is $(a_k, b_k) = (u_k, v_k)$ with correlation $\sigma_k$.
    By the Eckart--Young theorem, minimizing $\lVert P-\Sigma_{u\tilde{u}}\rVert^2_F$ under $\mathrm{rank}(P) \le d_z$ also yields
    \begin{equation}
        P^\star = \sum_{k=1}^{d_z} \sigma_k u_k v_k^T,
    \end{equation}
    which selects the same top-$d_z$ directions $(u_k, v_k)_{k=1}^{d_z}$.
    Therefore, under whitening, minimizing $\mathcal{L}_\mathrm{X\text{-}exo}$ recovers the top-$d_z$ canonical directions of CCA between $u$ and $\tilde{u}$.
\end{proof}

\subsection{Proof of Proposition~\ref{prop:context_robust_prediction}}\label{proof:context_robust_prediction}
\begin{proof}
For notational simplicity, let $\tilde{y} = y(\tilde{o}, \tilde{o}')$ and $\tilde{z} = \psi_\mathrm{IDM}(\tilde{o}, \tilde{o}')$.
Recall the elementary inequality
\[
\|a + b + c\|_2^2 \le 3\bigl(\|a\|_2^2 + \|b\|_2^2 + \|c\|_2^2\bigr).
\]
Since
\[
W(\tilde{z} - z) = (\tilde{y} - W\tilde{z}) - (y - Wz) + (y - \tilde{y}),
\]
it follows that
\begin{align}
\mathbb{E}\bigl[\|W(z - \tilde{z})\|_2^2\bigr]
\le
3\,\mathbb{E}\bigl[\|\tilde{y} - W\tilde{z}\|_2^2\bigr]
+ 3\,\mathbb{E}\bigl[\|y - Wz\|_2^2\bigr]
+ 3\,\mathbb{E}\bigl[\|y - \tilde{y}\|_2^2\bigr].
\end{align}
By construction,
\[
\mathcal{L}_{\xi\text{-}\mathrm{robust}}(\theta_\mathrm{IDM}, W; y)
= \mathbb{E}\bigl[\|y - Wz\|_2^2\bigr]
= \mathbb{E}\bigl[\|\tilde{y} - W\tilde{z}\|_2^2\bigr],
\]
and since $y$ is $\eta$-robust, $\mathbb{E}[\|y - \tilde{y}\|_2^2] \le \eta$. Combining these gives
\begin{equation}
\mathbb{E}\bigl[\|W(z - \tilde{z})\|_2^2\bigr]
\le
6\,\mathcal{L}_{\xi\text{-}\mathrm{robust}}(\theta_\mathrm{IDM}, W; y) + 3\eta.
\label{eq:wz_diff_bound}
\end{equation}
Now suppose $W$ has full column rank, and let $\sigma_W > 0$ denote its smallest singular value. Then for any vector $v$, $\|Wv\|_2^2 \ge \sigma_W^2 \|v\|_2^2$, which together with \eqref{eq:wz_diff_bound} yields
\begin{equation}
\mathbb{E}\bigl[\|z - \tilde{z}\|_2^2\bigr]
\le
\frac{6\,\mathcal{L}_{\xi\text{-}\mathrm{robust}}(\theta_\mathrm{IDM}, W; y) + 3\eta}{\sigma_W^2}.
\end{equation}
Hence, minimizing $\mathcal{L}_{\xi\text{-}\mathrm{robust}}$ forces the latent action $z$ to remain consistent across exogenous states, and the bound tightens as $\eta \to 0$.
\end{proof}

\section{Linear LAM Experiments Details}\label{sec:sim_detail}
We use a single RTX 4090 GPU for linear LAM training.
Table \ref{tab:hyperparam_linear_lam} shows the hyperparamters of linear LAM experiments.

\begin{table}[!h]
\caption{\textbf{Hyperparameters for the linear LAM experiments.} We use the same training setup across all experiments.}
\label{tab:hyperparam_linear_lam}
\centering
\begin{tabular}{l|c}
\toprule
\multicolumn{2}{c}{\textbf{Hyperparameters of linear LAM experiments}} \\ \midrule
$d_s$                              & 8                 \\
$d_a$                              & 8                 \\
$d_o$                              & 128               \\
num. traj. of train               & 8k                \\
Batch size                        & 128               \\
Steps/Probe Steps                 & 20k               \\
Learning rate                     & 1e-3       \\ \bottomrule      
\end{tabular}
\end{table}

\section{Practical LAM Experiments Details}\label{sec:practical_lam_detail}
As in the linear LAM experiment, we use a single RTX 4090 GPU for practical LAM experiments.
\begin{table}[!h]
\caption{\textbf{Hyperparameters for the practical LAM}  Architecture (up) and training settings (down) for the practical LAM.}
\centering
\begin{tabular}{l|c}
\toprule
\multicolumn{2}{c}{\textbf{Hyperparameters of Practical LAM}} \\ \midrule
$\psi_\mathrm{IDM}$ conv.\ channels & [128, 128, 128] \\
$\psi_\mathrm{IDM}$ MLP hidden dim. & 64 \\
$\psi_\mathrm{FDM}$ conv.\ channels & [32, 32, 32, 32] \\
Latent action dim.\ ($d_z$)         & 32 \\
Codebook size ($K$)                 & 5 \\
VQ commitment weight ($\beta$)      & 0.25 \\ \midrule
Optimizer                           & Adam \\
Learning rate                       & 3e-4 \\
Batch size                          & 128 \\
Total updates                       & 16k \\
Gradient clip norm                  & 5.0 \\
$\lambda_\mathrm{X\text{-}exo}$     & 1.0 \\
$\lambda_\mathrm{act}$              & 1.0 \\
Action label fraction               & 1\% \\ \bottomrule
\end{tabular}
\end{table}

\section{Additional Experiments on Distracting Control Suite}\label{sec:dcs_practical_lam}
The grid-world dataset in Section~\ref{sec:practical_lam} does not fully reflect the complexity of real-world video data.
We adopt it primarily to validate our theoretical findings in a controlled nonlinear setting, which is difficult to isolate in real-world datasets.
To partially close this gap, we additionally conduct practical LAM experiments on the Distracting Control Suite~\cite{dcs}.

\paragraph{Dataset.}
We collect offline datasets from the DeepMind Control Suite~\cite{control_suite}, focusing on the \texttt{cheetah-run} environment.
We use the pretrained PPO agent to collect 500 expert demonstrations.
Note that the behavior of the PPO expert induces a strong correlation between $o$ and $a$, which violates Assumption~\ref{assmp:orthogonal}.
To mimic in-the-wild videos, we further inject exogenous noise by perturbing the color and viewpoint of each observation.
Specifically, we use $(\Delta_{\mathrm{color}}, \Delta_{\mathrm{cam}}) = (0.5, 0.05)$, where $\Delta_{\mathrm{color}}$ controls the maximum per-channel color perturbation and $\Delta_{\mathrm{cam}}$ scales the magnitude of camera pose and zoom randomization.
Both color and camera parameters drift continuously across timesteps.

\paragraph{Model.}
We follow the architecture of LAPO~\cite{lapo} to implement the practical latent action model.
Both $\psi_\mathrm{IDM}$ and $\psi_\mathrm{FDM}$ are implemented with nonlinear layers consisting of several convolution layers with GELU and Tanh activations.
We remove the FSQ bottleneck used in the original implementation since it hindered overall performance in our experiments, and instead employ a strong information bottleneck ($d_z = 8$).
To investigate the quality of latent actions beyond the linear probing, we additionally train a latent action policy $\pi(z|o)$ and an action decoder $\pi_{\mathrm{Dec}}(a|z)$ with a simple MSE objective.
This protocol mirrors common scenarios in which latent actions are used to supervise VLA training.
We implement $\pi(z|o)$ with convolution layers and a linear head with ReLU6 activation, while $\pi_\mathrm{Dec}(a|z)$ is a 3-layer MLP.
We evaluate the learned policy and report the normalized return.
As an upper bound, we also train a behavior-cloning policy $\pi_\mathrm{BC}(a|o)$ directly on the labeled dataset.

\begin{figure}[!t]
    \centering
    \includegraphics[width=\textwidth]{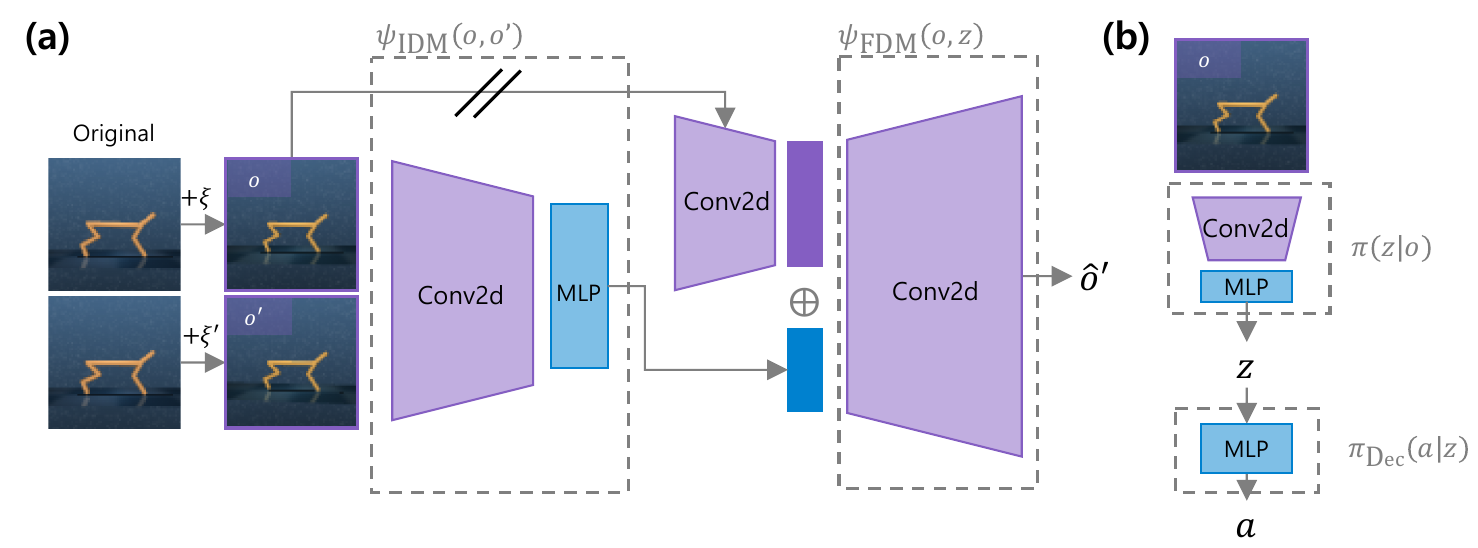}
    \caption{\textbf{Architecture of practical LAM on DCS.} (a) We first collect clean demonstrations from a PPO agent and inject exogenous noise into each observation. Both $\psi_\mathrm{IDM}$ and $\psi_\mathrm{FDM}$ are implemented with several convolution layers.
    (b) The latent action policy $\pi(z|o)$ predicts $z$ from $o$ using convolution layers, and $\pi_\mathrm{Dec}(a|z)$ decodes the prediction of the latent action policy into a ground-truth action.}
    \label{fig:dcs_practical}
\end{figure}

\paragraph{Training objectives.}
We compare the vanilla LAM trained with $\mathcal{L}_\mathrm{LAM}$ against the auxiliary objectives studied in Section~\ref{sec:theory}: cross-exogenous reconstruction $\mathcal{L}_\mathrm{X\text{-}exo}$, and exogenous-robust target prediction $\mathcal{L}_{\xi\text{-}\mathrm{robust}}$ instantiated with action labels and optical flow as $\eta$-robust targets.
In addition, motivated by \citet{dynamo}, we consider a variant where a vision encoder $f(\cdot)$ is jointly trained with the LAM.
For each image $I$, $f(\cdot)$ extracts a representation $o = f(I)$ that the LAM uses as its observation, and for the next observation $o' = f(I')$, $f(\cdot)$ is treated as stop-gradient and updated via an exponential moving average (EMA) of the online encoder.
\citet{dynamo} report that this representation improves downstream manipulation performance, suggesting it captures more endogenous information.
Within our framework, this corresponds to $f(\cdot)$ providing a representation with low $\xi$-sensitivity $\delta_h$, which is predicted to reduce exogenous noise energy and improve LAM training (Equation~\eqref{eqn:noise_energy_sensitivity}).

\paragraph{Results.}
Results are reported in Table~\ref{tab:res_practical_lam}.
The EMA-encoder variant ($+f(\cdot)$) substantially outperforms the vanilla LAM. 
In our framework, this can be interpreted as $f(\cdot)$ learning a representation with low $\xi$-sensitivity $\delta_h$, which is predicted to improve LAM training (Equation~\eqref{eqn:noise_energy_sensitivity}).
The auxiliary training objectives ($\mathcal{L}_\mathrm{X\text{-}exo}$, action prediction, and flow prediction) also yield consistent improvements over the vanilla LAM, in agreement with our theoretical predictions.

\begin{table}[!h]
\small
\centering
\caption{\textbf{Normalized return across auxiliary objectives on \texttt{cheetah-run}.} Results are reported as $\mathrm{mean}_{\pm \mathrm{std}}$ over 10 episodes.}\label{tab:res_practical_lam}
\begin{tabular}{l|cc|cccc} \toprule
Task  & $\pi_\mathrm{BC}(a|o)$ & $\mathcal{L}_\mathrm{LAM}$ &  $+f(\cdot)$ & $+\mathcal{L}_\mathrm{X\text{-}exo}$ & +Action-pred & +Flow-pred \\  \midrule
\texttt{cheetah-run} & $0.25_{\pm0.14}$ & $0.04_{\pm 0.03}$ & $0.18_{\pm 0.04}$ & $0.17_{\pm 0.05}$ & $0.17_{\pm 0.15}$ & $0.19_{\pm 0.10}$ \\ \bottomrule
\end{tabular}
\end{table}

\clearpage

\end{document}